\def\eqref#1{equation~\ref{#1}}
\def\1{\bm{1}}
\def\vb{{\bm{b}}}
\def\vm{{\bm{m}}}
\def\vp{{\bm{p}}}
\def\vq{{\bm{q}}}
\def\vx{{\bm{x}}}
\def\vzero{{\bm{0}}}
\def\vtheta{{\boldsymbol{\theta}}}
\def\veta{{\boldsymbol{\eta}}}
\def\vtau{{\boldsymbol{\tau}}}
\def\mA{{\bm{A}}}
\def\mG{{\bm{G}}}
\def\mI{{\bm{I}}}
\def\mL{{\bm{L}}}
\def\mM{{\bm{M}}}
\def\mS{{\bm{S}}}
\def\mW{{\bm{W}}}
\def\mPhi{{\bm{\Phi}}}
\DeclareMathAlphabet{\mathsfit}{\encodingdefault}{\sfdefault}{m}{sl}
\SetMathAlphabet{\mathsfit}{bold}{\encodingdefault}{\sfdefault}{bx}{n}
\newcommand{\E}{\mathbb{E}}
\newcommand{\R}{\mathbb{R}}
\DeclareMathOperator*{\argmin}{arg\,min}
\newcommand{\dd}{\mathrm{d}}
\newcommand{\id}{\mathrm{id}}
\newcommand{\Diff}{\mathrm{Diff}}
\newcommand{\conset}{\mathcal C}
\newcommand{\ABS}{\mathrm{ABS}}
\newcommand{\COM}{\mathrm{COM}}
\newcommand{\REL}{\mathrm{REL}}
\newtheorem{theorem}{Theorem}
\title{Noether's Razor: Learning Conserved Quantities}
\author{
Tycho F. A. van der Ouderaa \\
Imperial College London \\
London, UK
\And
Mark van der Wilk \\
University of Oxford \\
Oxford, UK
\And
Pim de Haan \\ 
CuspAI \\
Amsterdam, NL
}
\begin{document}

\maketitle

\newcommand{\pim}[1]{{\color{green}Pim: #1}}
\newcommand{\tycho}[1]{{\color{purple}Tycho: #1}}

\begin{abstract}
Symmetries have proven useful in machine learning models, improving generalisation and overall performance. At the same time, recent advancements in learning dynamical systems rely on modelling the underlying Hamiltonian to guarantee the conservation of energy.
These approaches can be connected via a seminal result in mathematical physics: Noether's theorem, which states that symmetries in a dynamical system correspond to conserved quantities.
This work uses Noether's theorem to parameterise symmetries as learnable conserved quantities. We then allow conserved quantities and associated symmetries to be learned directly from train data through approximate Bayesian model selection, jointly with the regular training procedure. As training objective, we derive a variational lower bound to the marginal likelihood. The objective automatically embodies an Occam's Razor effect that avoids collapse of conservation laws to the trivial constant, without the need to manually add and tune additional regularisers. We demonstrate a proof-of-principle on $n$-harmonic oscillators and $n$-body systems. We find that our method correctly identifies the correct conserved quantities and U($n$) and SE($n$) symmetry groups, improving overall performance and predictive accuracy on test data.
\end{abstract}

\section{Introduction}
Symmetries provide strong inductive biases, effectively reducing the volume of the hypothesis space. A celebrated example of this is the convolutional layer embedding translation equivariance in neural networks, which can be generalised to other symmetry groups \citep{cohen2016group}.

Meanwhile, physics-informed machine learning models \citep{greydanus2019hamiltonian,cranmer2020lagrangian}, typically relying on neural differential equations \citep{chen2018neural}, embed constraints known from classical mechanics into model architectures to improve accuracy on physical dynamical systems. 

Rather than strictly constraining a model to certain symmetries, recent works have explored whether invariance and equivariance symmetries in machine learning models can also be automatically learned from data. This often relies on separate validation data \citep{maile2022equivariance}, explicit regularisers \citep{finzi2021residual} or additional outer loops \citep{cubuk2018autoaugment}. Alternatively, we can take a Bayesian approach where we embed symmetries into the prior and empirically learn them through Bayesian model selection \citep{van2018learning,immer2022invariance,van2022relaxing}.

We propose to use \textit{Noether's theorem} \citep{emmy1918invariante} to parameterise symmetries in Hamiltonian machine learning models in terms of their conserved quantities. To do so, we propose to symmetrise a learnable Hamiltonian using a set of learnable quadratic conserved quantitites. By choosing the conserved quantities to be quadratic, we can find closed-form transformations that can be used to obtain an unbiased estimate of the symmetrised Hamiltonian.

Secondly, we phase symmetries implied by conserved quantities in the prior over Hamiltonians an leverage the \textit{Occam's razor} effect of Bayesian model selection \citep{rasmussen2000occam,van2018learning} to learn conserved quantities and their implied symmetries directly from train data. We derive a practical lower bound using variational inference \citep{hoffman2013stochastic} resulting in a single end-to-end training procedure capable of learning the Hamiltonian of a system jointly with its conserved quantities. As far as we know, this is the first case in which Bayesian model selection with variational inference is successfully scaled to deep neural networks, an achievement in its own right, whereas most works so far have relied on Laplace approximations \citep{immer2021scalable}.

Experimentally, we evaluate our \textit{Noether's razor} method on various dynamical systems, including $n$-simple harmonic oscillators and $n$-body systems. Our results suggest that our method is indeed capable of learning the conserved quantities that give rise to correct symmetry groups for the problem at hand. Quantitatively, we find that our method that learns symmetries from data matches the performance of models with the correct symmetries built-in as oracle. We outperform vanilla training, resulting in improved test generalisation and predictions that remain accurate over longer time periods. 



\section{Background}

\subsection{Hamiltonian mechanics}
Hamiltonian mechanics is a framework that describes dynamical systems in \textit{phase space}, denoted $\mathcal{M} = \R^M$, with $M$ even. Phase space elements $(\vq, \vp) \hspace{-0.1em}\in \mathcal{M}$ follow \textit{Hamiltonian equations of motion}:
\begin{equation}
\dot q_i = \frac{\partial H}{\partial p_i}
, \hspace{1em}
\dot p_i = -\frac{\partial H}{\partial q_i}
\label{eq:hamilton-eom}
\end{equation}
where the Hamiltonian $H : \mathcal{M} \to \R$ is an \textit{observable}\footnote{The term \textit{observable} in classical mechanics should not be confused with the statistical notion of a variable being observed or not. In fact, we will model observables as latent variables that are not observed.}, which are smooth functions on the phase space, that corresponds to the energy of the system. It is often simpler to write $\vx=(\vq,\vp)$, so that we have: $\dot{\vx} = J \nabla H$ and
$J= \left[ \begin{smallmatrix}0 & I \\ -I & 0 \end{smallmatrix} \right]$
, where $I$ is the identity matrix, $J$ is called the \textit{symplectic form} and $\nabla H = \nabla_\vx H(\vx)$ is the gradient of phase space coordinates.

\textit{Example: $n$-body problem in 3d.} If we consider a $d{=}3$ dimensional Euclidean space containing $n$ bodies, our position and velocity spaces are each $\R^{3n}$ making up phase space $\mathcal{M} = \R^{2\cdot 3n}$. Our Hamiltonian $H : \R^{3n} \times \R^{3n} \to \R$, which in this case is a separable function $H(q, p) = K(q) + P(p)$ of kinetic energy $K(q) = \sum_i m_i ||p||^2 / 2$ and the potential energy $P(p) = \sum_{i\neq j} G m_i m_j / ||q_i - q_j||$ where $m_i$ is the mass of a body $i$ and $G$ is the gravitational constant.

\subsection{Learning Hamiltonian mechanics from data}

We can model the Hamiltonian from data \citep{greydanus2019hamiltonian,ross2023learning,tanaka2022symplectic,zhong2019symplectic}. Concretely, we are interested in a posterior over functions that the Hamiltonian can take $p(H_{\vtheta} \mid \mathcal{D})$, conditioned on trajectory data $\mathcal{D} = \{ (\vx^n_t, \vx^n_{t'}) \}_{n=1}^N$ sampled from phase space at different time points $(t, t')$, or time difference $\Delta t {=} t' {-} t$. Given a new data point $\vx^*_t$, we would like to make predictions $p(\vx^*_{t'} | \vx^*_t, H_{\vtheta}, \mathcal{D})$ over phase space trajectories into the future $t'$.

\paragraph{Hamiltonian neural networks}

Hamiltonian neural networks \citep{greydanus2019hamiltonian} model the Hamiltonian $H$ using a learnable Hamiltonian $H_{\vtheta}: \mathcal{M} \to \R$ parameterised by $\vtheta \in \R^P$. With a straightforward Gaussian likelihood $p(\vx_{t'} | \vx_{t}, \vtheta) = \mathcal{N}(\vx_{t'} | \vx_t + J \nabla H_{\vtheta}(\vx_{t}) \Delta t, \sigma_{\text{data}}^2 \mI)$ with a small observation noise $\sigma_{\text{data}}^2$, a maximum likelihood fit can be found by minimising the negative log-likelihood $\vtheta_* {=} \argmin_\vtheta \sum_i \sum_t -\log p(\vx^{i}_{t+\Delta t} | \vx^i_{t}, \vtheta)$ on minibatches of data using stochastic gradient descent. The mean of this likelihood represents a single Euler integration step (Sec. 2.1 of \citet{david2023symplectic}), which bounds the possible accuracy of the fit to the true Hamiltonian $H$. In practice, we may replace this by more accurate differentiable numerical integrators \citep{kidger2022neural}.

\subsection{Noether's theorem}
\label{subsec:noether}
The theorem of \citep{emmy1918invariante}, here presented in the Hamiltonian formalism~\citep{Baez2020-cv,Arnold1989-ld}, links the concepts of an observable being conserved, to the Hamiltonian being invariant to the symmetries generated by an observable.

\paragraph{Conserved quantity}
Let $\mathcal O$ be the set of observables, which are smooth real-valued functions $\mathcal M \to \R$ on the phase space.
Given a trajectory $\vx(t)$ generated by the Hamiltonian $H$, we can compute the variation of an observable $O \in \mathcal O$ in time via the chain rule and Hamilton's equations of motion (\cref{eq:hamilton-eom})
\begin{equation}
\frac{\dd O}{\dd t}
=\sum_i \frac{\partial O}{\partial q_i} \dot q_i {+}\frac{\partial O}{\partial p_i} \dot p_i
=\sum_i \frac{\partial O}{\partial q_i} \frac{\partial H}{\partial p_i} {-}\frac{\partial O}{\partial p_i} \frac{\partial H}{\partial q_i}
=\{ O, H \},
\label{eq:observable-change}
\end{equation}
where the last equality defines the \textit{Poisson bracket} $\{ \cdot, \cdot \}: \mathcal O \times \mathcal O \to \mathcal O$. The Poisson bracket relates to the symplectic form via $\{O, H\}(\vx)=\nabla O(\vx) \cdot J \nabla H(\vx)$. An observable that does not change along any trajectory is called a \textit{conserved quantity}. As we can see from \cref{eq:observable-change}, an observable $O$ is conserved if and only if $\{O, H\}=0$.

From two conserved quantities $O, O' \in \mathcal O$, we can create a new conserved quantity by linear combination $\alpha O + \beta O' \in \mathcal O$ with coefficients for $\alpha, \beta \in \R$, which is conserved because the Poisson bracket is linear in both arguments. Also, we can take the product $OO' \in \mathcal O$, with $(OO')(\vx)=O(\vx)O'(\vx)$, which is conserved because the Poisson bracket satisfies Leibniz's law of differentiation $\{OO', H\}=\{O,H\}O'+O\{O', H\}$. Finally, the Poisson bracket of the conserved quantities $\{O, O'\}\in \mathcal O$ is also conserved, because of the Jacobi identity.

\paragraph{Symmetries generated by observables}
Referring back as to the Hamiltonian equations of motion in \cref{eq:hamilton-eom}, note that these equations work not just for the Hamiltonian $H \in \mathcal O$ of the system, but for \textit{any} observable $O \in \mathcal O$. So given any starting point $x_0$, we can generate a trajectory $\vx(\tau)$ satisfying
\begin{align}
\begin{split}
    \vx(0)&=\vx_0\\
    \end{split}
\begin{split}
\label{eq:dPhi/dt}
    \dot \vx(\tau)=J \nabla O(\vx(\tau)).
    \end{split}
\end{align}
We have used a different symbol to not conflate the ODE time $\tau$ with regular time $t$ of the trajectory generated by the Hamiltonian.
Denote the flow associated to this ODE generated by observable $O$ by $\Phi_O^\tau : \mathcal M \to \mathcal M$, mapping $\vx_0$ to $\Phi_O^\tau(\vx_0)=\vx(\tau)$. Note that any ODE flow satisfies $\Phi_O^0 = \id_\mathcal{M}$ and $\Phi_O^{\tau+\kappa}=\Phi_O^\tau \circ \Phi_O^{\kappa}$. Hence, the observable $O$ generates a one-dimensional group $\mathcal G_O$, parametrized by $\tau$, that is a subgroup of the group $\Diff(\mathcal M)$ of diffeomorphisms $\mathcal M \to \mathcal M$.

\begin{theorem}[Noether]
The observable $O \in \mathcal O$ is a conserved quantity on the trajectories generated by Hamiltonian $H \in \mathcal O$ if and only if $H$ is invariant to $\mathcal G_O$, meaning that for all $\tau \in \R$, $H \circ \Phi^\tau_O=H$.
\end{theorem}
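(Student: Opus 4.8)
The plan is to reduce everything to a single differential identity and then integrate it. The key observation is that $H \circ \Phi^\tau_O = H$ for all $\tau$ is equivalent, by differentiating in $\tau$, to the statement that $\frac{\dd}{\dd\tau}\bigl(H \circ \Phi^\tau_O\bigr) = 0$ for all $\tau$. So the bulk of the proof is a computation of this $\tau$-derivative, which I expect to collapse to a Poisson bracket.

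**First I would** fix a starting point $\vx_0$ and set $\vx(\tau) = \Phi^\tau_O(\vx_0)$, which by definition satisfies $\dot\vx(\tau) = J \nabla O(\vx(\tau))$ from \cref{eq:dPhi/dt}. Applying the chain rule,
\begin{equation}
\frac{\dd}{\dd\tau} H(\vx(\tau)) = \nabla H(\vx(\tau)) \cdot \dot\vx(\tau) = \nabla H(\vx(\tau)) \cdot J \nabla O(\vx(\tau)).
\end{equation}
Now I would use the antisymmetry of $J$ (i.e. $J^\top = -J$) to rewrite $\nabla H \cdot J \nabla O = -\nabla O \cdot J \nabla H = -\{O, H\}$ evaluated at $\vx(\tau)$, using the identity $\{O,H\}(\vx) = \nabla O(\vx)\cdot J\nabla H(\vx)$ stated just before the theorem. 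Hence $\frac{\dd}{\dd\tau} H(\vx(\tau)) = -\{O,H\}(\vx(\tau))$.

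**From here both directions follow.** For the ``conserved $\Rightarrow$ invariant'' direction: if $O$ is conserved then $\{O,H\} \equiv 0$ on all of $\mathcal M$ (by \cref{eq:observable-change}, since $O$ is conserved on every trajectory, in particular those through each point of phase space), so $\frac{\dd}{\dd\tau} H(\vx(\tau)) = 0$ for all $\tau$; since this holds for every starting point $\vx_0$ and $\Phi^0_O = \id_\mathcal{M}$, we get $H(\Phi^\tau_O(\vx_0)) = H(\vx_0)$ for all $\tau$ and all $\vx_0$, i.e. $H\circ\Phi^\tau_O = H$. For the converse: if $H\circ\Phi^\tau_O = H$ for all $\tau$, then $\frac{\dd}{\dd\tau} H(\vx(\tau)) = 0$ identically; evaluating at $\tau = 0$ and using $\Phi^0_O(\vx_0) = \vx_0$ gives $\{O,H\}(\vx_0) = 0$ for every $\vx_0$, so $\{O,H\} \equiv 0$, and by \cref{eq:observable-change} $O$ is conserved along every Hamiltonian trajectory.

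**The main obstacle** is mostly bookkeeping rather than mathematical depth: one must be careful that ``conserved quantity'' is a statement about all trajectories (hence a pointwise identity $\{O,H\}=0$ on $\mathcal M$), so that the equivalence $\{O,H\}\equiv 0 \iff \frac{\dd}{\dd\tau}(H\circ\Phi^\tau_O)\equiv 0$ is genuinely an equivalence of global statements and not just a pointwise one at $\tau=0$. The only analytic subtlety is that the flow $\Phi^\tau_O$ is assumed to exist (locally in $\tau$ suffices, since the argument is local), which is guaranteed for smooth $O$ by standard ODE theory; completeness of the flow is not needed if one interprets invariance on the maximal interval of existence. Everything else — the chain rule and the antisymmetry of $J$ — is routine.
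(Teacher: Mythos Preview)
Your proof is correct and follows essentially the same route as the paper: both compute $\frac{\dd}{\dd\tau}(H\circ\Phi^\tau_O)=\{H,O\}=-\{O,H\}$ via the chain rule and Hamilton's equations for the flow of $O$, then invoke antisymmetry (you of $J$, the paper of the Poisson bracket) to close the chain of equivalences. Your version is simply a more explicit unpacking of what the paper compresses into a single sentence and a reference to \cref{eq:observable-change}.
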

\begin{proof}
    By reasoning analogous to that in \cref{eq:observable-change}, the value of the Hamiltonian changes under the flow generated by observable $O$ as $\frac{\dd H}{\dd \tau}=\{H, O\}$. Noting that the Poisson bracket is anti-symmetric, we have that: $O$ is a conserved quantity $\iff \{O, H\}=0 \iff \{H, O\}=0 \iff$ $H$ is invariant to the flow generated by $O$.
\end{proof}


\subsection{Automatic symmetry discovery}
Symmetries play an important role in machine learning models, most notably group invariance and equivariance constraints \citep{cohen2016group}. Instead of having to define symmetries explicitly in advance, recent attempts have been made to learn symmetries automatically from data. Even if learnable symmetries can be differentiably parameterised, learning them can remain difficult as symmetries act as constraints on the functions a model can represent and are, therefore, not encouraged by objectives that solely optimise train data fit. As a result, even if a symmetry would lead to better test generalisation, the training collapses into selecting no symmetry. Common ways to overcome this are designing explicit regularisers that encourage symmetry \citep{benton2020learning,van2022relaxing}, which often require tuning, or use of validation data \citep{alet2021noether,maile2022equivariance,zhou2020meta}. Learning symmetries for integrable systems was proposed in \citep{bondesan2019learning}, whereas our framework works more generally also for non-integrable systems, such as the 3-body problem. Recent works have demonstrated effectivity of Bayesian model selection to learn symmetries directly from training data. This works by optimising the marginal likelihood, which embodies an Occam's razor effect that trades off data fit and model complexity. For Gaussian processes, the quantity can often be computed in closed-form \citep{van2018learning}, and can be scaled to neural networks through variational inference \citep{van2021learning} and linearised Laplace approximations \citep{immer2022invariance}. 

\section{Symmetrising Hamiltonians with Conserved Quantities}

Our method introduced in the next section will learn the Hamiltonian of a system together with a set of conserved quantities. First, in this section we discuss how the learned conserved quantities will be parametrised, and how we can make the Hamiltonian invariant to the symmetry generated by conserved quantities.

\subsection{Parameterising conserved quantities}
\label{sec:conserved-quantity}

In this work, we limit ourselves to modelling up to a fixed maximum number of $K$ conserved quantities $C^1_{\veta}, C^2_{\veta}, \ldots, C^K_{\veta}: \mathcal M \to \R$ are observables parameterised by \textit{symmetrisation parameters} $\veta$, to distinguish them from the \textit{model parameters} $\vtheta$ parameterising the Hamiltonian scalar field.


In this paper, we consider quadratic conserved quantities of the form $C_{\veta}(\vx) = \vx^T \mA \vx/2 + \vb^T \vx + c$. As we use the conserved quantities only through their gradients, the constant is arbitrary and can be ignored. The learnable symmetrisation parameters are thus $\veta = \{ \mA, \vb \}$, for a symmetric matrix $\mA$.
A quadratic conserved quantity $C$ generates a symmetry transformation whose scalar field $\dot \vx=J\nabla C(\vx)=J\mA \vx+J\vb$ is affine, or linear on the homogeneous coordinates $(\vx, 1)$. Its flow can be analytically solved
\begin{align}
\begin{bmatrix}
\Phi_C^\tau(\vx) \\
1 \\
\end{bmatrix}
 = \text{exp} \left(\tau
\begin{bmatrix}
J \mA & J \vb \\
\vzero^T & 0 \\
\end{bmatrix}\right)
\begin{bmatrix}
\vx \\
1 \\
\end{bmatrix}
\end{align}
using the matrix exponential $\text{exp}(\cdot)$ for which efficient numerical algorithms exist \citep{moler2003nineteen}. This equation can be verified to have the correct scalar field and boundary condition, and thus forms the unique solution to the ODE in \cref{eq:dPhi/dt}.

\subsection{Symmetrising observables}
Given an observable $C \in \mathcal O$, we want to transform an observable $f$ into $\hat f$ that is invariant to the transformations generated by $C$.
This means that $\hat f \circ \Phi_C^\tau = \hat f$ for all symmetry time $\tau \in \R$. Via Noether's theorem, we know that this is equivalent to $C$ being conserved in the trajectories generated by $f$, and also equivalent to $\{C, \hat f\}=0$.
However, this equation does not prescribe how to obtain such $\hat f$. Instead, we'll create $\hat f$ by symmetrizing over the symmetry group generated by $C$.
This is done by averaging over the orbit of the transformation
\[
\hat f(\vx) = \int_\R f(\Phi^\tau_C(\vx))\mu(\tau).
\]
with a measure $\mu$ over symmetry time $\tau$. This measure $\mu$ induces a measure on the 1-dimensional subgroup $\mathcal G_C$ of the group of diffeomorphisms $\mathcal M\to \mathcal M$. If this measure on $\mathcal G_C$ is uniform (specifically, a right-invariant measure ~\citep{Halmos1950-fr}), then $\hat f$ is indeed invariant.

Instead of a single symmetry generator, we can also have a set $\conset=\{C_1, ..., C_K\}$ of observables and we want to make $f$ invariant to all of these. Assume that this set spans a vector space of observables that is closed under the Poisson bracket (i.e. they form a Lie subalgebra). In that case, the groups of transformations of the observables combined generate a group $\mathcal G_{\conset}$ \cite[Thm. 5.20]{Hall2015-xa}. This group is parameterized by a vector of symmetry times $\bm \tau \in \R^K$. The corresponding flow is $\Phi^{\bm \tau}_{\conset}=\Phi^1_{\sum_i \tau_i C_i}$. To make an observable $f$ invariant to the symmetries of all conserverved quantities $\conset$, equivalently to the group $\mathcal G_{\conset}$, we symmetrize
\begin{equation}
\hat f(\vx) = \int_{\R^K} f(\Phi^{\bm \tau}_{\conset}(\vx))\mu(\bm \tau).
\label{eq:symmetrisation}
\end{equation}
with some measure $\mu$ over $\R^K$. As before, if this induces a uniform measure over $\mathcal G_{\conset}$, then this symmetrization indeed makes $\hat f$ invariant to $\mathcal G_{\conset}$.

However, a probability measure $\mu(\bm \tau)$ that gives a uniform distribution over $\mathcal G_{\conset}$ might not exist, for example when the group contains a non-compact group of translations. Even when such a measure does exist, it may be hard to construct, and the symmetrisation integral in \cref{eq:symmetrisation} may be intractable to compute. So instead, in practice, we approximate this by choosing $\mu(\bm \tau)$ to be a unit normal distribution $\mathcal{N}(0, \mI_K)$ or uniform distribution.
This results in a relaxed notion of symmetry in $\hat f$ which can be interpreted as a form of robustness to actions of the symmetry group implied by the conserved quantity, by smoothing the function in this direction around data, in contrast to strict invariance by definition closed under group actions along the full orbit.
Finally, we approximate the integral by an unbiased Monte Carlo estimate with $S$ samples.

\section{Automatic Symmetry Discovery using Noether's Razor}

\label{sec:objective}

Now that we have a way of parameterising symmetry differentiably as conservation laws through Noether's theorem, we need an objective function that is capable of selecting the right symmetry. Unfortunately, regular training objectives that only rely on data fit can not necessarily distinguish the correct inductive bias, as noted in prior work \citep{van2018learning,immer2022invariance,van2021learning}. This is because, even if train data originates from a symmetric distribution, there can be both non-symmetric and symmetric solutions that fit the train data equally well, given a sufficiently flexible model. Consequently, the regular maximum likelihood objective that only measures train data fit will not necessarily favour a symmetric model, even if we expect this to generalise best on test data. Instead of having to resort to cross-validation to select the right symmetry inductive bias, we propose to use an approximate marginal likelihood on the train data. This has the additional benefit of being differentiable, allowing symmetrisation to be learned with back-propagation along with regular parameters in a single training procedure. 
In our case, we use \textit{Noether's theorem} to parameterise symmetries in our prior through conserved quantities, which we can optimise with back-propagation using a differentiable lower bound on the marginal likelihood. This quantity, also known as the `evidence', differs distinctly from maximum likelihood in that it balances both train fit as well as model complexity. The \textit{Occam's razor} effect encourages symmetry and leverages the symmetrisation process to `cut away' prior density over Hamiltonian functions that are not symmetric, if this does not result in a worse data fit. The resulting posterior predictions automatically becomes symmetric if observed data obeys a symmetry (high evidence for symmetry), but can become non-symmetric if this does not match the data (low evidence for symmetry). Hence, the name of our proposed method for automatic inductive bias selection is \textit{Noether's razor}.

\subsection{Probabilistic model with symmetries embedded in the prior.}

\begin{wrapfigure}{r}{0.3\textwidth}
\vspace{-4em}
\begin{center}
\resizebox{0.5\linewidth}{!}{
\begin{tikzpicture}[->,>=stealth',shorten >=2pt,auto,node distance=1.5cm,
                    thick,main node/.style={circle,draw,font=\sffamily\Large\bfseries}]

  \node[main node] (F) {$F_{\vtheta}$};
  \node[main node] (C) [right of=F] {$C_{\veta}$};
  \node[main node] (H) [below of=F, left of=C] {$H$};
  \node[main node] (X) [below of=H] {$X$};

  \path[every node/.style={font=\sffamily\small}]
    (F) edge node [left] {} (H)
    (C) edge node [right] {} (H)
    (H) edge node [left] {} (X);
\end{tikzpicture}
}
\end{center}
\caption{Graphical probabilistic model. Trajectory data $X$ depends on a symmetrised Hamiltonian $H$ induced by non-symmetrised observable $F$ and conservation laws $C$.}
\end{wrapfigure}
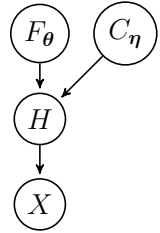

To be more explicit about our probabilistic model, we can introduce four variables, namely a non-symmetrised observable $F_{\vtheta}$, a set of conserved quantities $C_{\veta}$, which induce a symmetrised Hamiltonian $H$ generating the observed trajectory data $X$. We treat trajectory data an an observed variable, consider the conserved quantities as part of an empirical prior as we optimise over them, and integrate out the Hamiltonian as latent. The construction can be interpreted a placing a sophisticated prior over the functions that the symmetrised Hamiltonian $H$ can represent, which is the variable of primary interest. The underlying non-symmetrised $F_{\vtheta}$ does not have a direct physical meaning as $H$ does, but defines a prior over neural networks to flexibly define a density over a rich class of possible functions. The conserved quantities $C_{\veta}$ control the amount of symmetry in the effective prior over symmetrised Hamiltonians $H$. Empirically optimising $C_{\veta}$ through Bayesian model selection allows us to `cut away' density in the prior over $H$ that correspond to functions that are not symmetric - as the symmetrisation averages functions in $F_{\vtheta}$ that lie in the same orbit and thereby increases the relative density of symmetric functions in $H$. We hypothesise that we will not over-fit conserved quantities as $\veta$ is relatively low-dimensional, only representing quadratic functions, while we integrate out the high-dimensional neural network model parameters $\vtheta$ that parameterises the observable $F_{\vtheta}$. In future work, it would be interesting to explore a richer function classes for conserved quantities, such as neural networks, although we do expect this to be more difficult and to require additional priors or regularisation techniques to avoid over-fitting.


\subsection{Bayesian model selection for symmetry discovery}

To learn the right symmetry from data, we propose to use Bayesian model selection through optimisation of the marginal likelihood. In the previous sections, we have phrased symmetries parameterised by $\veta$ as part of the prior over Hamiltonians. The symmetry parameters $\veta$ parameterise the space of possible `models' that we consider, whereas the model parameters $\vtheta$ parameterise the weights of a single model. To perform Bayesian model selection on the symmetries, we are interested in computing the marginal likelihood:
\begin{align}
p(\vx | \veta) = \int_{\vtheta} p(\vx | \vtheta, \veta) p(\vtheta) \mathrm d \vtheta
\end{align}
which requires integrating (marginalising) the likelihood over model parameters $\vtheta$ weighted by the prior, and is sometimes referred to as the `evidence' for a particular model. Unlike maximum likelihood, the marginal likelihood has an Occam's razor effect \citep{smith1980bayes,rasmussen2000occam} that balancing both data fit and model complexity, allowing optimisation of symmetry parameters $\veta$. Although the marginal likelihood is typically intractable, certain approximate Bayesian inference techniques can provide differentiable estimates. In the next sections, we will use variational inference to derive a tractable and differentiable lower bound to the marginal likelihood that can be used to find a posterior over $\vtheta$ and optimise symmetries $\veta$.


\paragraph{Why the marginal likelihood can learn symmetry} To understand why the marginal likelihood objective is capable of learning the right symmetry (to learn $\veta$), Sec. 3.2 \citep{van2018learning} proposed to decompose it through the product rule:
\begin{align}
p(\vx \mid \veta) = p(\vx_1 \mid \veta) p(\vx_2 | \vx_1, \veta) p(\vx_3 \mid \vx_{1:2}, \veta) \prod\nolimits_{c=4}^C p(\vx_c \mid \vx_{1:c-1}, \veta)
\end{align}
which shows that the marginal likelihood measures how much parts of the dataset predict other parts of the data - a measure of generalisation that does not require cross-validation. Given a perfect data fit, the marginal likelihood will be higher when the right symmetry is selected, as parts of the dataset will result in better and more certain predictions on other part of the data. This is unlike the maximum likelihood, which is always maximised with perfect data fit, with or without the right symmetry. For some posterior approximations, such as linearised Laplace approximations, it can be analytically shown that symmetry maximises the approximate marginal likelihood (App. G.2 of \citet{immer2022invariance}). Our method is very similar, but uses more expressive variational inference which can optimise the posterior globally, rather than relying on a local Taylor expansion. 


\subsection{Lower bounding the marginal likelihood}
\label{sec:lower-bound}

The marginal likelihood of an Hamiltonian neural network is typically not tractable in closed-form. However, we can derive a lower bound to the marginal likelihood using variational inference (VI):
\begin{align}
\log p(\vx \mid \veta) 
&\geq \E_{\vtheta}\left[ \log p(\vx \mid \vtheta, \veta)\right] - \text{KL}(q_{\vm, \mS}(\vtheta) \mid \mid p(\vtheta) ) \\
&\geq \E_{\vtheta} \left[ \E_{\vtau} \left[ \sum\nolimits_{i=1}^N \log \mathcal{N}(\vx^i_{t'} \mid \widehat{H}^{\vtau}_{\mathcal{\vtheta, \veta}}(\vx^i_t) , \sigma^2_{\text{data}} \mI) \right] \right] - \text{KL}(q_{\vm, \mS}(\vtheta) \mid\mid p(\vtheta \mid \vzero, \sigma^2_{\text{prior}} \mI)) \nonumber
\end{align}
where $\widehat{H}^{\vtau}_{\vtheta, \veta}(\vx^i_t) = \frac{1}{S} \sum_{s=1}^S H_{\vtheta, \veta}(\mPhi_{\veta}^{\vtau^{(s)}}(\vx_t^i))$ and $\widehat{H}$ is an unbiased $S$-sample Monte Carlo estimator of the symmetrised Hamiltonian. We write $\E_{\vtheta} := \E_{\vtheta} {\sim} q_{\vm, \mS}$ and $\E_{\vtau} = \E_{\vtau {\sim} \prod_{s=1}^S \mu(\vtau)}$ for which we can obtain an unbiased estimate by taking Monte Carlo samples. The first inequality is the standard VI lower bound. The second inequality follows from applying Jensen's inequality (again) which uses the fact that the log likelihood is a convex function. Similar lower bounds to invariant models that average over a symmetry group have recently appeared in prior work \citep{van2021learning,schwobel2022last,nabarro2022data}. Full derivation in \cref{sec:elbo-hnn}.

\subsection{Improved variational inference for scalable Bayesian model selection}

Variational inference is a common tool to perform Bayesian inference on models with intractable marginal likelihoods, including neural networks. In deep learning literature, however, its use is typically limited to better predictive uncertainty estimation and rarely for Bayesian model selection. Meanwhile, linearised Laplace approximations have recently been successfully applied to Bayesian model selection \citep{immer2021scalable} and symmetry learning in specific \citep{immer2022invariance,van2024learning}, with a few reported cases of model selection using VI only in single neural network layers \citep{van2021learning,schwobel2021last}. Optimising Bayesian neural networks with variational inference is much less established than training regular neural networks, for which many useful heuristics are available. This work, however, provides evidence that it is also possible to perform approximate Bayesian model selection using VI in deep neural networks, which we deem an interesting observation in its own right. To make sure the lower bound on the marginal likelihood is sufficiently tight, we employ a series of techniques, including a richer non-mean field family of matrix normal posteriors \citep{louizos2016structured}, and closed-form updates of the prior precision and output variances derived with expectation maximisation. Details on how we train a Bayesian neural network using variational inference can be found in \cref{sec:vi-tricks}. 



\section{Results}
In this section, we will discuss how the learned symmetries are analysed and then list our experiments and results.

\subsection{Analyzing learned symmetries}
\label{sec:analysis}
In our experiments, we will find a set of $K$ conserved quantities $C_k: \mathcal M \to \R$. As we consider quadratic conserved quantities in particular, we can equivalently analyze the resulting generators of the associated symmetries $\hat G_k(\vx)=J\nabla C_k$ which are affine and thus representable with a matrix $\hat G_k \in \R^{(M+1)\times (M+1)}$ on homogeneous coordinates $(\vx, 1)$. In \cref{app:ground-truth-qtys}, we list for each system the $L$ ground truth conserved quantities generators $G^\star_l$. The learned and ground truth generators can be stacked in to the matrices $\hat \mG \in \R^{K\times (M+1)^2}, \mG^* \in \R^{L \times (M+1)^2}$ respectively. As we can identify the symmetries only up to linear combinations, we have learned the correct symmetries if the learned generators span a linear subspace of $\R^{(M+1)^2}$ that coincides with the space spanned by the ground truth generators.
To verify this, we test two properties. First, we show that the matrix $\hat \mG$ has $L$ non-zero singular values. Secondly, for the first $L$ right singular vectors $v_i \in \R^{(M+1)^2}$, we decompose $v_i=v^{\mathbin{\|}}_i+v^{\perp}_i$ in a vector in ground truth subspace, and one orthogonal to it. The learned $v_i$ is a correct conserved quantity if $v^{\perp}_i=0$, or equivalently, because the singular vectors are normalized, if $\lVert v^{\mathbin{\|}}_i \rVert=1$. We call this measure the ``parallelness''.

\subsection{Simple Harmonic Oscillator}

\begin{wrapfigure}{r}{0.5\textwidth}
\vspace{-4.0em}
\begin{center}
\resizebox{1.0\linewidth}{!}{
\includegraphics[]{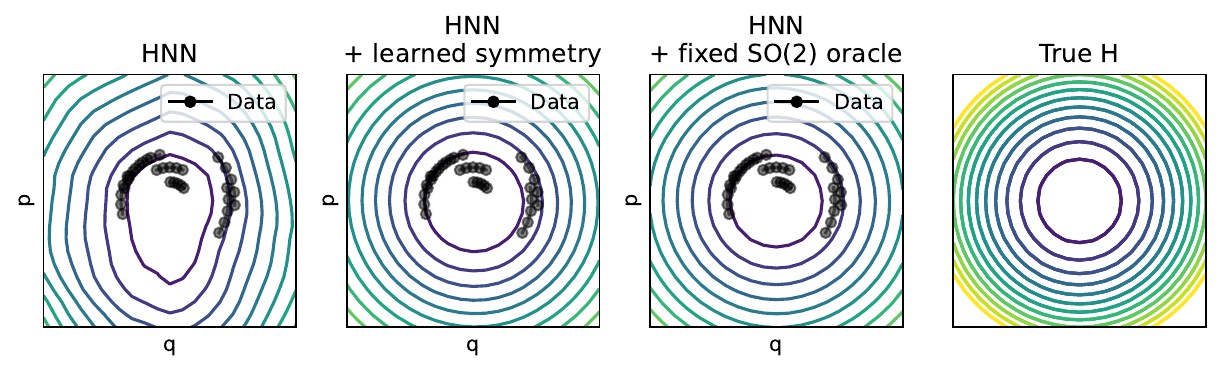}
}
\end{center}
\vspace{-0.0em}
\caption{Learned Hamiltonians on phase space of simple harmonic oscillator by HNN models.}
\label{fig:sho_H}
\end{wrapfigure}

We start with a demonstration on the simple harmonic oscillator. This text book example has a 2-dimensional phase space, making learned Hamiltonians amenable to visualisation. Further, it has a clear rotational symmetry SO(2), relating to the conserved phase. On a finite set of generated train data, we model the Hamiltonian using a vanilla HNN, our symmetry learning method, and a model with true symmetry built-in as reference oracle (experimental details in \cref{sec:sho-details}). In \cref{fig:sho_H}, we find that our symmetry learning method results in a rotationally invariant Hamiltonian that matches the fixed rotational SO(2) symmetry. Further away from the origin, the learned Hamiltonian differs from the ground truth Hamiltonian, as there is no data in that region. In \cref{tab:results-simple-harmonic-oscillator}, we find that the learned symmetry has a better ELBO on the train set and matches the improved predictive performance of the model with the correct symmetry built-in. The symmetry learning method outperforms the vanilla model in terms of predictive performance on the test set.

\begin{table}[h]
\caption{Learning Hamiltonian dynamics of the simple harmonic oscillator. We compare a vanilla HNN, our symmetry learning method, and a model with the correct SO(2) symmetry built-in as reference oracle. Our method achieves reference oracle performance, indicating correct symmetry learning, and outperforms the vanilla model by improving predictive performance on the test set.}
\vspace{-0em}
\begin{center}
\resizebox{\linewidth}{!}{
\begin{tabular}{l c | cccc | c}
\toprule
Learned dynamics: & & \multicolumn{4}{c}{Train data} & Test data \\
\textbf{simple harmonic oscillator} & & Train MSE & NLL/N & KL/N & -ELBO/N $(\downarrow)$ & 
Test MSE $(\downarrow)$ \\
\midrule
HNN & &
0.005 & 0.3667 & 3314.374 & 3314.741 & 0.005
\\
HNN + learned symmetry & (\textbf{ours}) &
0.002 & -2.618 & 3304.754 & \textbf{3302.136} & \textbf{0.002}
\\
HNN + fixed SO(2) & (reference oracle) &
0.002 & -3.213 & 3298.357 & \textbf{3295.144} & \textbf{0.002}
\\
\bottomrule
\end{tabular}
}
\end{center}
\label{tab:results-simple-harmonic-oscillator}
\end{table}


\newpage

\begin{wrapfigure}{r}{0.45\textwidth}
\vspace{-3.0em}
\begin{center}
\resizebox{1.0\linewidth}{!}{
\includegraphics[]{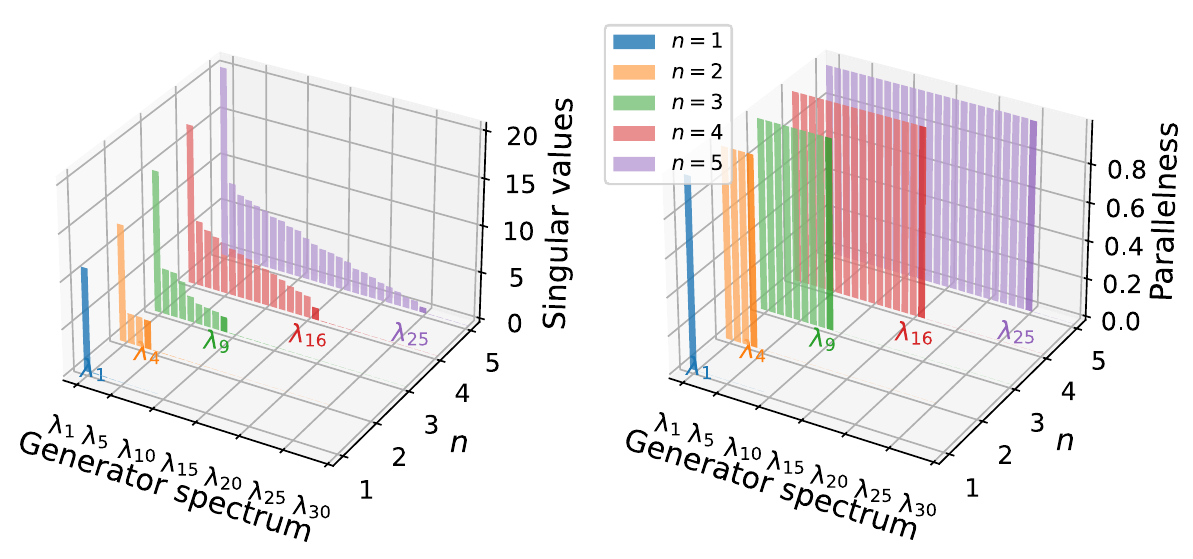}
}
\end{center}
\vspace{-1em}
\caption{Singular value and parallelness of the singular vectors of the learned generators, for $n$ oscilators. U($n$) is correctly learned.}
\label{fig:nharm-spectrum}
\end{wrapfigure}
\subsection{\texorpdfstring{$n-$}{n-}Harmonic Oscillators}

Now, we consider $n-$harmonic oscillators. This system has as symmetry group the unitary Lie group U($n$) of dimensionality of $n^2$ (see \cref{sec:nharm-details}). We sample random trajectories from phase space and train a HNN neural network without and with symmetry learning using variational inference. Again, we find improved ELBO and test performance for learned symmetries \cref{tab:results-nharm}.
Following the protocol from \cref{sec:analysis}, we analyze the learned symmetries. In \cref{fig:nharm-spectrum} (right), we see that for varying $n$, we indeed find that the matrix of learned symmetries has $n^2$ nonzero singular values. Furthermore, the first $n^2$ singular vectors lie in the ground truth subspace of generators with measured parallelness $\lVert v^{\mathbin{\|}}_i \rVert > 0.99$, as seen in \cref{fig:nharm-spectrum} (left). This shows that the $U(n)$ symmetry is corectly learned.

\begin{table}[h]
\caption{Learning Hamiltonian dynamics of $3-$fold harmonic oscillators. We compare HNN with symmetry learning to a vanilla HNN without symmetry learning and to the correct U(3) symmetry built-in as fixed reference oracle. We find that our method can discover the correct symmetry, achieves reference oracle performance, and outperforms vanilla training in both ELBO and test performance.}
\vspace{-1em}
\begin{center}
\resizebox{\linewidth}{!}{
\begin{tabular}{l c | cccc | c}
\toprule
Learned dynamics: & & \multicolumn{4}{c}{Train data} & Test data \\
\textbf{simple harmonic oscillator} & & Train MSE & NLL/N & KL/N & -ELBO/N $(\downarrow)$ & 
Test MSE $(\downarrow)$ \\
\midrule
HNN & &
0.00106
& -12.04
& 5.27
& -6.77
& 0.00002141
\\
HNN + learned symmetry & (\textbf{ours}) &
0.00102
& -12.16
& 2.53
& \textbf{-9.63}
& \textbf{0.00000994}
\\
HNN + fixed symmetry U($n$) & (reference oracle) &
0.00102
& -12.15
& 2.21
& \textbf{-9.94}
& \textbf{0.00000898}
\\
\bottomrule
\end{tabular}
}
\end{center}
\label{tab:results-nharm}
\end{table}

\subsection{\texorpdfstring{$n$}{n}-Body System}

\begin{wrapfigure}{r}{0.45\textwidth}
\vspace{-2.0em}
\begin{center}
\resizebox{1.0\linewidth}{!}{
\includegraphics[]{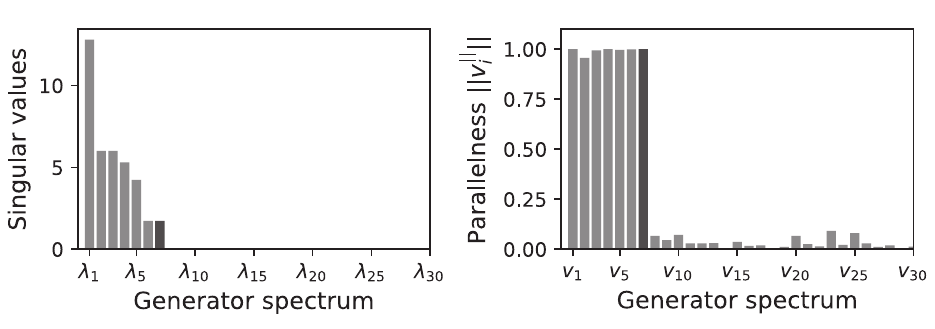}
}
\end{center}
\vspace{-1em}
\caption{Singular value and parallelness of the singular vectors of the learned generators for three body system in two dimensions. The 7-dimensional Lie group $\mathcal{G}$ of quadratic conserved quantities is correctly learned.}
\vspace{-1em}
\label{fig:nbody-spectrum}
\end{wrapfigure}

To investigate performance of our method on more interesting systems, we consider learning the Hamiltonian of an $n$-body system with gravitational interaction. We use 3 bodies in 2 dimensions so that trajectories and generators remain easy to visualise. As the Hamiltonian depends only on the norm of the momenta and on positions via the relative distances of the bodies, the three dimensional group SE(2) of rototranslations is an invariance of the ground truth Hamiltonian.
However, as explained in \cref{sec:nbody-details}, the Hamiltonian has four more quadratic conserved quantities. They generate a 7-dimensional Lie group $\mathcal G$ of symmetries. This group has the same orbits on the phase space as SE(2). Therefore, a function being invariant to SE(2) is equivalent to it being invariant to $\mathcal G$. We'll find that Noether's razor discovers not just SE(2), but all seven generators of $\mathcal G$.

\begin{figure}[h]
\begin{center}
    \resizebox{1.0\linewidth}{!}{
    \includegraphics[]{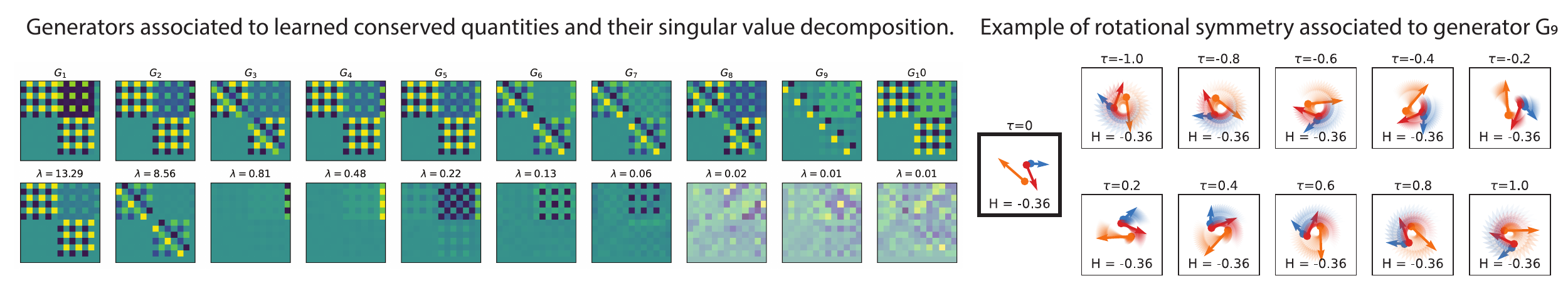}
    }
\end{center}
\caption{Learned generators associated by conserved quantities and their singular value decomposition. We find a subspace spanned by the 7 linear generators that correspond to the correct symmetries (see \cref{sec:nbody-details}): (1) rotation of the center of mass $R^{\COM}$, (2) rotation around the origin $R^{\ABS}$, (4+5) translation, (5+6+7) momentum-dependent translations $P, Q$, (8+9+10) inactive ($\lambda < 0.05$). The first 7 singular vectors lie in the ground truth subspace of generators with measured parallelness $||v_i^{||}|| > 0.95$.}
\label{tab:nbody-generators}
\end{figure}

In \cref{tab:results-nbody}, we compare performance of a vanilla variational HNN with our symmetry learning approach and a model that has the appropriate SE(2) symmetry of rototranslations built-in as an reference oracle. We find that our method is able to \textit{automatically discover} the conserved quantities and associated generators that span the symmetry group. The model achieves the same performance as the model with the symmetry built-in as reference oracle, but without having required the prior knowledge. Compared to the vanilla baseline, our approach improves test accuracy on both in-distribution as out-of-distribution test sets.

\begin{table}[h]
\caption{Learning Hamiltonian dynamics of 2d 3-body system with variational Hamiltonian neural networks (HNN). We compare our symmetry learning method to a vanilla model without symmetry learning and a model with the correct SE(2) symmetry built-in as a reference oracle. Our method capable of discoverying symmetry achieves the oracle performance, outperforming the vanilla method.}
\vspace{-1em}
\begin{center}
\resizebox{\linewidth}{!}{
\begin{tabular}{l c | cccc | ccc}
\toprule
Learned dynamics: & & \multicolumn{4}{c}{Train data} & Test data & Test data (moved) & Test data (wider) \\
\textbf{2d 3-body system} & & Train MSE & NLL/N & KL/N & -ELBO/N $(\downarrow)$ & 
Test MSE $(\downarrow)$ & 
Test MSE $(\downarrow)$ &
Test MSE $(\downarrow)$ 
\\
\midrule
HNN & &
0.0028 & -13.87 & 13.34 & -9.52 &
0.0016 & 0.0035 & 0.0016 
\\
HNN + learned symmetry & (\textbf{ours}) &
0.0017 & -20.09 & 7.28 & \textbf{-12.81} &
\textbf{0.0006} & \textbf{0.0004} & \textbf{0.0006}
\\
HNN + fixed SE(2) & (reference oracle) &
0.0019 & -19.27 & 7.96 & \textbf{-11.32} &
\textbf{0.0006} & \textbf{0.0006} & \textbf{0.0006}
\\
\bottomrule
\end{tabular}
}
\end{center}
\label{tab:results-nbody}
\vspace{-1em}
\end{table}

After training, we can analyse the learned conserved quantities and implied symmetries by inspecting their associated generators. In \cref{tab:nbody-generators}, we plot these generators as well as their singular value decomposition. We see that our method correctly learns 7 singular values with $\lambda_i > 0.05$ and the associated singular vectors lie in the ground truth subspace with $\lVert v^{\mathbin{\|}}_i \rVert > 0.95$.
This indicates that our method is in fact capable of inferring the right symmetries from train data, beyond merely improving generalisation by improving predictive performance on the test set.



\vspace{-1em}
\section{Conclusion}

In this work, we propose to use \textit{Noether's theorem} to parameterise symmetries in machine learning models of dynamical systems in terms of conserved quantities. Secondly, we propose to leverage the \textit{Occam's razor} effect of Bayesian model selection by phrasing symmetries implied by conserved quantities in the prior and learning them by optimising an approximate marginal likelihood directly on train data, which does not require validation data or explicit regularisation of the conserved quantities.  Our approach, dubbed \textit{Noether's razor}, encourages symmetries by balancing both data fit and model complexity. We derive a variational lower bound on the marginal likelihood providing a concrete objective capable of jointly learning the neural network as well as the conserved quantities that symmetrise the Hamiltonian. As far as we know, this is also the first time differentiable Bayesian model selection using variational inference has been demonstrated on deep neural networks. We demonstrate our approach on $n$-harmonic oscillators and $n$-body systems. We find that our method learns the correct conserved quantities by analysing the singular values and correctness of the subspace spanned by the generators implied by learned conserved quantitites. Further, we find that our method performs on-par with models with the true symmetries built-in explicitly and we outperform vanilla model, improving generalisation and predictive accuracies on test data.


\bibliographystyle{plainnat}
\bibliography{neurips_2024}

\newpage
\clearpage
\appendix

\section{Mathematical derivations}
\label{app:math-derivations}

\subsection{ELBO of Hamiltonian Neural Network} 
\label{sec:elbo-hnn}

We can find a lower bound on the marginal likelihood $\log p(\vx \mid \veta)$ through variational inference,

\begin{align}
\log p(\vx \mid \veta) 
&\geq \E_{\vtheta}\left[ \log p(\vx \mid \vtheta, \veta)\right] - \text{KL}(q_{\vm, \mS}(\vtheta) \mid \mid p(\vtheta) ) \\
&= \E_{\vtheta}\left[ \sum\nolimits_{i=1}^N \log \mathcal{N}(\vx^i_{t'} \mid H_{\mathcal{\vtheta, \veta}}(\vx^i_t), \sigma^2_{\text{data}} \mI) \right] - \text{KL}(q_{\vm, \mS}(\vtheta) \mid\mid p(\vtheta \mid \vzero, \sigma^2_{\text{prior}} \mI)) \\
&= \E_{\vtheta}\left[\sum\nolimits_{i=1}^N \log \mathcal{N}(\vx^i_{t'} \mid \E_{\vtau} \left[ \widehat{H}^{\vtau}_{\mathcal{\vtheta, \veta}}(\vx^i_t) \right], \sigma^2_{\text{data}} \mI) \right] - \text{KL}(q_{\vm, \mS}(\vtheta) \mid\mid p(\vtheta \mid \vzero, \sigma^2_{\text{prior}} \mI)) \\
&\geq \E_{\vtheta} \left[ \E_{\vtau} \left[ \sum\nolimits_{i=1}^N \log \mathcal{N}(\vx^i_{t'} \mid \widehat{H}^{\vtau}_{\mathcal{\vtheta, \veta}}(\vx^i_t) , \sigma^2_{\text{data}} \mI) \right] \right] - \text{KL}(q_{\vm, \mS}(\vtheta) \mid\mid p(\vtheta \mid \vzero, \sigma^2_{\text{prior}} \mI)) \\
&\approx \frac{1}{M} \sum_{i=1}^M \left[ \sum\nolimits_{i=1}^N \log \mathcal{N}(\vx^i_{t'} \mid \frac{1}{S} \sum_{s=1}^S H_{\mathcal{\vtheta, \veta}}(\mPhi^{\vtau^{(s)}}_{\mathcal{C}_{\veta}}(\vx^i_t)) , \sigma^2_{\text{data}} \mI) \right] - \text{KL}(q_{\vm, \mS}(\vtheta) \mid\mid p(\vtheta \mid \vzero, \sigma^2_{\text{prior}} \mI))
\end{align}
where we use $M$ samples to obtain an unbiased estimate of $\E_{\vtheta} := \E_{\vtheta} {\sim} q_{\vm, \mS}$ and a single sample $\E_{\vtau} = \E_{\vtau \sim \prod_{s=1}^S p(\vtau)}$ and use an $S$-sampled Monte Carlo estimate of the symmetrised Hamiltonian:
\begin{align}
\widehat{H}_{\vtheta, \veta}(\vx_t^i) = \frac{1}{S}\sum_{s=1}^S H_{\vtheta, \veta}(\mPhi_{\mathcal{C}_{\veta}}^{\vtau^{(s)}} (\vx_t^i))
\text{ with samples } \vtau^{(1)}, \vtau^{(2)}, \ldots, \vtau^{(S)} \sim \mu(\vtau)
\end{align}
with the fact that this yields an unbiased estimator of the true symmetrised Hamiltonian $\E_{\vtau}\left[\widehat{H}_{\vtheta, \veta}(\cdot) \right] = \widehat{H}_{\vtheta, \veta}(\cdot)$. 
where we obtained an unbiased estimate of expectations  through $S$ sampled symmetry transformations and $M$ sampled parameters. The first inequality is the standard VI lower bound. The second inequality follows from applying Jensen's inequality (again), using the fact that the log likelihood is convex. Similar bounds to symmetrisation by averaging over orbits have appeared in prior work \citep{van2021learning,schwobel2022last,nabarro2022data}.

\section{Ground truth conserved quantities}
\label{app:ground-truth-qtys}
In this section, we'll discuss the conserved quantities present in the ground-truth Hamiltonians of the systems we discuss.

As stated in \cref{subsec:noether}, we can combine conserved quantities into new ones by linear combinations, products, and Poisson brackets. Thus we'll speak of the generating set of conserved quantities, which combine into all conserved quantities.

\subsection{Simple harmonic oscillator}
\label{sec:sho-details}

For the simple harmonic oscillator, the phase space is $\R \times \R$. The ground truth Hamiltonian is
\[
H(p, q)=\frac{p^2}{2m}+\frac{k q^2}{2}.
\]
We choose $m=k=1$, so that $H(p,q)=(p^2+q^2)/2$. Time evolution is a rotation of phase space. The Hamiltonian itself generates all conserved quantities.

\subsection{\texorpdfstring{$n-$}{n-}Simple harmonic oscillators}
\label{sec:nharm-details}

The phase space is $\R^{2n}$. We choose all $k=m=1$, so that the Hamiltonian is
\[
H(p,q)=\lVert p \rVert^2/2 + \lVert q \rVert^2/2.
\]
Time evolution rotates each pair $(q_i, p_i)$. The conserved quantities are generated by the following set, for $i,j=1,...,n, i\neq j$.
\begin{align*}
    H_i &= (q_i^2+p_i^2)/2 \\
    R_{ij} &= q_ip_j-q_jp_i \\
    F_{ij} &= q_iq_j+p_ip_j
\end{align*}
The conserved quantity $H_i$ rotates the pair $(q_i, p_i)$. $R_{ij}$ rotates both pairs $(q_i, q_j)$ and $(p_i, p_j)$ and $F_{ij}$ rotates both pairs $(q_i, p_j)$ and $(q_j, p_i)$.

Alternatively, we can interpret the phase space as $\mathbb C^n$ \citep[Sec. 41E]{Arnold1989-ld}, with the positions being the real part and the momenta the imaginary part. In that case, the symplectic form $J$ becomes simply the complex number $-i$. Then $H(\vx)=\vx^\dagger \vx/2$. Time evolution is multiplication by the complex number $e^{-it}$. Conserved quantities are $H_i=x_i^*x_i/2$, which is real, and $C_{ij}=x_i^*x_j$, whose real part corresponds to $F_{ij}$ and imaginary part to $R_{ij}$. The conserved quantities $H_i$ and $C_{ij}$ are quadratic and thus their symmetries are generated by linear matrices, which are all skew-Hermitian. In fact, all skew-Hermitian matrices are spanned by these generators. This shows that the combined symmetry group is in fact $U(n)$ \citep{Amiet2002-hj}.

\subsection{\texorpdfstring{$n$}{n}-body}
\label{sec:nbody-details}

In $D$ spatial dimensions, with $n$ bodies, the phase space is $\R^{2nD}$ and the Hamiltonian is
\[
H(p,q) = \sum_i \frac{\lVert p_i \rVert^2}{2m_i} + \sum_{i\neq j}  \frac{G m_i m_j}{\sqrt{\lVert q_i-q_j \rVert^2 + \epsilon^2}}
\]
with a small $\epsilon$ to make it smooth.

The main conserved quantities are, for $d,d'=1,...D$, $d\neq d'$, \begin{align*}
    T_d &= \sum_i p_{id} \\
    R^{\ABS}_{dd'} &= \sum_i (q_i \wedge p_i)_{dd'} \\
    R^{\COM}_{dd'} &= \left(q_{\COM} \wedge p_{\COM}\right)_{dd'} \\
\end{align*}
with $q_{\COM}=\sum_i m_i q_i/\sum_i m_i$ and $p_{\COM}=\sum_i m_i p_i/\sum_i m_i$.

These generate further conserved quantities of interest:
\begin{align*}
    P_d &= T_d^2 \\
    Q_{dd'} &= T_dT_{d'} \\
    R^{\REL}_{dd'} &= \sum_i ((q_i-q_{\COM}) \wedge (p_i-p_{\COM})_{dd'} =R^{\ABS}_{dd'}-2R^{\COM}_{dd'}\\
\end{align*}
As $T_d$ is linear, we can still learn $P_d$ and $Q_{dd'}$ as quadratic conserved quantities. As the $R^{\REL}$ is a linear combination of other conserved quantities, we disregard it in our analysis of the learned symmetries.

The corresponding symmetries are: $T_d$ translates all bodies in the $d$ direction. $R^{\COM}_{dd'}$ rotates the center of mass of all the bodies in the plane $dd'$, while preserving the positions relative to the center of mass. $R^{\ABS}$ rotates all bodies relative to the origin. $R^{\REL}$ rotates all bodies relative to the center of mass. $P_d$ translates in the $d$ direction proportional to its COM momentum. $Q_{dd'}$ translates in direction $d$ proportional to $p_{\COM,d'}$ and vice versa.

These symmetries together generate a group we'll call $\mathcal G$. This group has as a subgroup SE($D$), which is generated by $T$ and $R^{\ABS}$. The group $\mathcal G$ has the same orbits as SE($D$), as each element in $\mathcal G$ can be seen as a rototranslation conditional on some property of phase space. Because the orbits are the same, for any observable $f:\mathcal M \to \R$, we have that $f$ invariant to $\mathcal G$ is true if and only if $f$ is invariant to SE(2).

This system can have further conserved quantities, such as the Laplace–Runge–Lenz vector for $n=2$. However, these are not expressible as a quadratic polynomial. As far as we know, the conserved quantities listed above are all that are expressible as a quadratic polynomial.

\section{Experimental details}

All experiments were run on a single NVIDIA RTX 4090 GPU with 24GiB of GPU memory.

\subsection{Simple harmonic oscillator experiment}

\paragraph{Data}

For the training data, we sampled 7 initial conditions from unit Gaussian and simulated 4 datapoints with $\Delta t = 0.2$ apart. For test data, we sampled 100 initial conditions from unit Gaussian and simulated 20 timesteps with $\Delta t = 0.2$ from each initial condition.

\paragraph{Training}
We use an MLP with 2 hidden layers, each consisting of 200 hidden neurons and a linear exponential unit activation function with $\alpha=2$. For symmetrisation, we use $S=200$ samples from a uniform measure for $\mu(\tau)$. We use 20 Euler steps for time integration. We use fixed output noise and closed-form prior variance (\cref{sec:vi-tricks}). We optimise the ELBO in full batch with Adam \citep{kingma2014adam} ($\beta_1=0.9, \beta_2=0.999)$ trained for 2000 epochs with a learning rate of 0.001, cosine annealed to 0.

\subsection{\texorpdfstring{$n-$}{n-}Simple harmonic oscillators experiment}

\paragraph{Data} For training data, we randomly sampled 200 initial conditions independently from a unit normal. For each initial condition, we simulated a trajectory consisting of 50 data points at 0.3 time units apart.

\paragraph{Training} We use an MLP with 3 hidden layers with 200 hidden units and exponential linear activation functions with $\alpha = 1$. We optimise the ELBO in mini batches of $B = 20$ trajectories, using $S=100$ symmetrisation samples, 20 Euler steps for time integration, and $M=2$ weight samples using Adam \citep{kingma2014adam} ($\beta_1=0.9, \beta_2=0.999)$ for 2000 epochs with a learning rate starting from 0.001, cosine annealed to 0.

\subsection{\texorpdfstring{$n$}{n}-body experiment}

\paragraph{Data}
For training data, we randomly sampled 200 initial conditions by independently sampling positions from a unit normal, shifted by a normal with a standard deviation of 3. From each initial condition, we simulated trajectories consisting of 50 data points 0.3 time units apart.

\paragraph{Training} We use an MLP with 4 hidden layers with 250 hidden unit units and exponential linear unit activation functions with $\alpha=1$. We optimise the ELBO batches of $B=20$ trajectories, $S=100$ symmetrisation samples, 20 Euler steps for time integration, and $M=2$ weight samples using Adam \citep{kingma2014adam} ($\beta_1=0.9, \beta_2=0.999)$ for 2000 epochs with a learning rate starting from 0.001, cosine annealed to 0.

%
%

%
%

\newpage
\section{On training a neural network with variational inference}
\label{sec:vi-tricks}

\subsection{Matrix normal variational posterior}

Naively, the covariance of a Gaussian posterior over weights grows quadratically with the number of parameters $|\vtheta|^2$. It is therefore common to disregard all correlations between weights, resulting in a diagonal or mean-field posterior. Although the ELBO remains a lower bound for any choice of approximate family, more crude approximations can increase the slack in the bound, possibly making it harder to use estimates for Bayesian model selection. We, therefore, propose to use matrix normal posteriors \citep{louizos2016structured} factorised per layer, 
\begin{align}
\begin{split}
q(\vtheta) = \prod_{i=1}^N q(\vtheta_l)
\end{split}\text{, with} \hspace{1em}
\begin{split}
q(\vtheta_l) = \mathcal{N}(\vtheta_l \mid, \vm_l, \mS_l \otimes \mA_l)
\end{split}
\end{align}
where $\vtheta = (\vtheta_1, \vtheta_2, \ldots, \vtheta_L)$ denote the weights of each layer $l$. If we denote the parameters in terms of weight and bias matrices of each layer with $\text{in}_l$ input and $\text{out}_l$ output dimensions, $\text{vec}(\vtheta_l) = \begin{bmatrix} \mW_l & \vb_l \end{bmatrix} \in \R^{\text{out}_l \times (\text{in}_l + 1)}$, we can equivalently write this posterior as a factorised matrix normal distribution:
\begin{align}
\begin{split}
q(\vtheta) = \prod_{i=1}^N q(\vtheta_l)
\end{split}\text{, with } \hspace{1em}
\begin{split}
q(\vtheta_l) = \mathcal{M}\mathcal{N}(\begin{bmatrix} \mW_l & \vb_l \end{bmatrix} \mid, \mM_l, \mS_l \otimes \mA_l)
\end{split}
\end{align}
The variational parameters $\{ \mW_l, \mS_l, \mA_l \}$ provide the mean $\mM_l$ as well as correlations between layer inputs and bias $\mA_l \in \R^{(\text{in}_l + 1), (\text{in}_l + 1)}$ and layer outputs $\mS_l \in \R^{\text{out}_l, \text{out}_l}$. For $L$ hidden layers of width $H$, the number of variational parameters scales quadratically $\mathcal{O}(LH^2)$ compared to the quartic number of variational parameters $\mathcal{O}(LH^4)$ we would need to represent the full covariance. This strikes a practical balance between taking important correlations into account while avoiding having to make a mean-field assumption. Further, we note that the matrix gaussian posterior of \citep{louizos2016structured} is the same approximate distribution as used in Kronecker-factored Laplace approximations \citep{grosse2016kronecker}. In Laplace approximations the covariance is the inverse Hessian, whereas in variational inference the covariance is optimised using the ELBO. Layer-factored matrix normal distributions have been succesfully applied to perform approximate Bayesian model selection based on the Laplace approximation in \citep{immer2021scalable,immer2022invariance,van2024learning}. This work provides evidence that variational inference can also be used to obtain approximate posteriors of this form and obtain a lower bound on the marginal likelihood that is sufficiently tight to perform Bayesian model selection in deep neural networks.

\subsection{Closed-form output variance}

It can be shown that the output variance that maximises the marginal likelihood $\hat{\sigma}_{\text{data}}^2$ is the empirical variance of the output. We, therefore, either fixing the output variance $\hat{\sigma}_{\text{data}}^2$ - typically to a very small number in noise-free settings, or setting the output variance to an empirical output variance. An exponentially weighted average of the empirical variance over mini-batches can be used.

\paragraph{On downscaling the KL term by a $\beta-$scalar}

Many deep learning papers that use variational inference down-scale the KL term by a $\beta-$parameter \citep{higgins2017beta}. We note that, for standard Gaussian likelihoods, scaling the output variance is equivalent to inversely scaling the KL term. We do advice against downscaling of the KL term, as it makes it less clear that the resulting objective is still a lower bound to the marginal likelihood, and hides the fact that the lower bound corresponds to a changed model with altered output variance. In MAP estimation under a Gaussian likelihood, the output variance is arbitrary as it only scales the objective not effecting the optimum, and the objective is often simplified as the mean squared error. In variational inference, the output variance does play an important role of balancing the relative importance between the log likelihood (data fit) and KL term (pull to prior). In this setting, using half mean squared error effectively corresponds to an output variance of $\sigma_{\text{data}}^2 = 1$. In practice, this value is often too high because common machine learning datasets have little label noise. As a result, the log likelihood term is too weak and the KL term is too strong. We hypothesise that this has led to practitioners to down-weighting the KL term to obtain sensible posterior predictions, without necessarily realising that they were effectively altering the output variance of the model. Using automatic output variance, the optimal $\hat{\beta}$ can be set to (a running estimate of) the inverse of the empirical variance, also known as the empirical prior precision.

\subsection{Closed-form prior variance minimising inverse KL}
\label{app:closed-form-variance}

Consider the setting of a $D$-dimensional Gaussian $q$, parameterised by mean $\vm$ and covariance $\mS$, and a zero mean Gaussian $p_v$ with scalar variance $v$ in each of the equally many dimensions:
\begin{align*}
\begin{split}
q = \mathcal{N}(\vm, \mS)
\end{split}\text{, }
\begin{split}
p_v = \mathcal{N}(\vzero, v\mI)
\end{split}
\end{align*}
where $\vzero$ denotes a zero vector and $\mI$ an identity matrix. As the log likelihood does not depend on $v$, we can find $v$ that optimises the marginal likelihood by finding the minimiser of the inverse KL:
\begin{align*}
\argmin_v \text{KL}\left[q\ ||\ p_v\right]
&= \argmin_v \frac{1}{2} \left[ \log \frac{|v\mI|}{|\mS|} - D + \text{Tr}((v\mI)^{-1}\mS) + (\vzero - \vm)^T (v\mI)^{-1} (\vzero - \vm) \right] \\
&= \argmin_v \frac{1}{2} \left[ \log \frac{|v\mI|}{|\mS|} - D + \text{Tr}(\mS) / v + \vm^T\vm/v \right]  \\
&= \argmin_v \left[ D \log(v) + \text{Tr}(\mS)/v + \vm^T\vm/v \right ]
\end{align*}
Setting the derivative to zero:
\begin{align}
0 &= \frac{\partial}{\partial v} \left[ D \log(v) + \frac{1}{v}\text{Tr}(\mS) + \vm^T \vm / v \right ] \nonumber \\
0 &= - \frac{-Dv + \text{Tr}(\mS) + \vm^T\vm}{v^2} \nonumber \\
v_* &= \frac{\text{Tr}(\mS) + \vm^T\vm}{D}
\label{eq:auto_variance}
\end{align}
We found KL-minimising variance $v_*$ in closed-form as a function of $\vm$ and $\mS$.
Verified numerically.

\subsection{Plugging minimising variance into KL}
\label{app:auto-variance-loss}

Plugging $v_*$ back into the Gaussian $p_{v_*}$ and computing the KL:
\begin{align*}
\text{KL}\left[q\ ||\ p_{v_*}\right]
&= \frac{1}{2} \left[ \log \frac{|{v_*}\mI|}{|\mS|} - D + \text{Tr}(({v_*}\mI)^{-1}\mS) + (\vzero - \vm)^T ({v_*}\mI)^{-1} (\vzero - \vm) \right] \nonumber \\
&= \frac{1}{2} \left[ \log \frac{|{v_*}\mI|}{|\mS|} - D + \frac{D\text{Tr}(\mS)}{\text{Tr}(\mS) + \vm^T\vm} + \frac{D \vm^T\vm}{\text{Tr}(\mS) + \vm^T\vm} \right] = \frac{1}{2} \left[ \log \frac{|{v_*}\mI|}{|\mS|} \right] \nonumber \\
&= \frac{1}{2} \left[ D \log \left(\frac{\text{Tr}(\mS) + \vm^T\vm}{D}\right) - \log{|\mS|} \right] 
\end{align*}
shows that the resulting KL is only measuring the relative volume $\frac{1}{2}\log \frac{|q|}{|p|}$ between the prior and the posterior, which can be further simplified as
\begin{align*}
\text{KL}\left[q\ ||\ p_{v_*}\right]
&= \frac{1}{2} \left[ D \log (\text{Tr}(\mS) + \vm^T\vm) - D \log(D) - \log{|\mS|} \right]
\label{eq:auto_kl}
\end{align*}
In practice, we might reparameterise $\mS = \mL \mL^T$ in terms of its triangular Cholesky factor $\mL$ and use
\begin{align*}
\log | \mS | &= \log | \mL \mL^T | = \log |\mL|^2 = 2 \log | \mL | = 2 \log \prod_i \mL_{ii} = 2 \sum_i \log \mL_{ii} \\
\text{Tr}(\mS) &= \text{Tr}(\mL \mL^T) = \sum_{i,j} \mL_{ij}^2
\end{align*}
This gives the final expression of the KL
\begin{align}
\text{KL}\left[q\ ||\ p_{v_*}\right]
&= \frac{1}{2} \left[ D \log \left(\sum_{i,j} \mL_{ij}^2 + \sum_i \vm_i^2\right) - D \log(D) - 2 \sum_i \log \mL_{ii} \right]
\end{align}
which implicitly uses the derived optimal variance $v_{*} = \frac{\sum_{i,j} \mL_{ij}^2 + \sum_i \vm_i}{D}$.

%



\section{Code and Questions}

The code is available at \url{https://github.com/tychovdo/noethers-razor}. \\
For any questions, please contact the corresponding author, Tycho van der Ouderaa, by email.

\end{document}